\DeclarePairedDelimiter{\brac}{[}{]}
\DeclarePairedDelimiter{\braces}{\{}{\}}
\DeclarePairedDelimiter{\paren}{\lparen}{\rparen}
\newcommand{\AlgName}{AlphaMax}
\newcommand{\mc}[1]{\mathcal{#1}}
\newcommand{\cone}{f_1}
\newcommand{\czero}{f_0}
\newcommand{\mixu}{f}
\newcommand{\mixl}{g}
\newcommand{\mpu}{\alpha}
\newcommand{\mpl}{\beta}
\newcommand{\cls}{h}
\newcommand{\tpr}{\gamma}
\newcommand{\fpr}{\eta}
\newcommand{\pr}{\rho}
\newcommand{\tprhat}{\hat{\gamma}}
\newcommand{\prhat}{\hat{\rho}}
\newcommand{\fprPU}{\eta^{pu}}
\newcommand{\prPU}{\rho^{pu}}
\newcommand{\fprLU}{\eta^{pu}}
\newcommand{\tprLU}{\gamma^{pu}}
\newcommand{\fprPUhat}{\hat{\fpr}^{pu}}
\newcommand{\fprLUhat}{\hat{\fpr}^{pu}}
\newcommand{\tprLUhat}{\hat{\tpr}^{pu}}
\newcommand{\prLUhat}{\hat{\rho}^{pu}}
\newcommand{\prLU}{\rho^{pu}}
\newcommand{\rocPU}{\text{ROC$^{pu}$}}
\newcommand{\aucPU}{\text{AUC$^{pu}$}}
\newcommand{\aucLU}{\text{AUC$^{pu}$}}
\newcommand{\aucREC}{\text{AUC$^{est}$}}
\newcommand{\aucPN}{\text{AUC}}
\newcommand{\xset}{\mc{X}}
\newcommand{\yset}{\mc{Y}}
\newcommand{\Xu}{X}
\newcommand{\Xone}{X_1}
\newcommand{\xu}{x}
\newcommand{\Xl}{\Xone}
\newcommand{\xl}{x}
\newcommand{\EE}{\mathbb{E}}
\newtheorem{theorem}{Theorem}
\title{Recovering True Classifier Performance in Positive-Unlabeled Learning}
\author{
  Shantanu Jain, Martha White, Predrag Radivojac \\
  Department of Computer Science\\
       Indiana University, Bloomington, Indiana, USA\\
     \{shajain, martha, predrag\}@indiana.edu \\
  }
\begin{document}

\maketitle

\begin{abstract}
A common approach in positive-unlabeled learning is to train a classification model between labeled and unlabeled data. This strategy is in fact known to give an optimal classifier under mild conditions; however, it results in biased empirical estimates of the classifier performance. In this work, we show that the typically used performance measures such as the receiver operating characteristic curve, or the precision-recall curve obtained on such data can be corrected with the knowledge of class priors; i.e., the proportions of the positive and negative examples in the unlabeled data. We extend the results to a noisy setting where some of the examples labeled positive are in fact negative and show that the correction also requires the knowledge of the proportion of noisy examples in the labeled positives. Using state-of-the-art algorithms to estimate the positive class prior and the proportion of noise, we experimentally evaluate two correction approaches and demonstrate their efficacy on real-life data.
\end{abstract}

\section{Introduction}
Performance estimation in binary classification is tightly related to the nature of the classification task. As a result, different performance measures may be directly optimized during training. When (mis)classification costs are available, the classifier is ideally trained and evaluated in a cost-sensitive mode to minimize the expected cost \cite{Whalen1971,Elkan2001}. More often, however, classification costs are unknown and the overall performance is assessed by averaging the performance over a range of classification modes. The most extensively studied and widely used performance evaluation in binary classification involves estimating the Receiver Operating Characteristic (ROC) curve that plots the true positive rate of a classifier as a function of its false positive rate \cite{Fawcett2006}. The ROC curve provides insight into trade-offs between the classifier's accuracies on positive versus negative examples over a range of decision thresholds. Furthermore, the area under the ROC curve (AUC) has a meaningful probabilistic interpretation that correlates with the ability of the classifier to separate classes and is often used to rank classifiers \cite{Hanley1982}. Another important performance criterion generally used in information retrieval relies on the precision-recall (pr-rc) curve, a plot of precision as a function of recall. The precision-recall evaluation, including summary statistics derived from the pr-rc curve, may be preferred to ROC curves when classes are heavily skewed \cite{Davis2006}. 

Although model learning and performance evaluation in a supervised setting are well understood (Hastie et al. \citeyear{Hastie2001}), the availability of unlabeled data gives additional options and also presents new challenges. A typical semi-supervised scenario involves the availability of positive, negative and (large quantities of) unlabeled data. Here, the unlabeled data can be used to improve training \cite{Blum1998} or unbias the labeled data \cite{Cortes2008}; e.g., to estimate class proportions that are necessary to calibrate the model and accurately estimate precision when class balances (but not class-conditional distributions) in labeled data are not representative (Saerens et al. \citeyear{Saerens2002}). This is often the case when it is more expensive or difficult to label examples of one class than the examples of the other. A special case of the semi-supervised setting arises when the examples of only one class are labeled. It includes open-world domains such as molecular biology where, for example, wet lab experiments determining a protein's activity are generally conclusive; however, the absence of evidence about a protein's function cannot be interpreted as the evidence of absence. This is because, even when the labeling is attempted, a functional assay may not lead to the desired activity for a number of experimental reasons. In other domains, such as social networks, only positive examples can be collected (such as `liking' a particular product) because, by design, the negative labeling is not allowed. The development of classification models in this setting is often referred to as positive-unlabeled learning (Denis et al. \citeyear{Denis2005}).

State-of-the-art techniques in positive-unlabeled learning tackle this problem by treating the unlabeled sample as negatives and training a classifier to distinguish between labeled (positive) and unlabeled examples. Following Elkan and Noto~\shortcite{Elkan2008}, we refer to the classifiers trained on a labeled sample from the true distribution of inputs, containing both positive and negative examples, as \emph{traditional classifiers}. Similarly, we refer to the classifiers trained on the labeled versus unlabeled data as \emph{non-traditional classifiers}. In theory, the true performance of both traditional and non-traditional classifiers can be evaluated on a labeled sample from the true distribution (traditional evaluation). However, this is infeasible for non-traditional learners because such a sample is not available in positive-unlabeled learning. As a result, the non-traditional classifiers are evaluated by using the unlabeled sample as substitute for labeled negatives (non-traditional evaluation). Surprisingly, for a variety of performance criteria, non-traditional classifiers achieve similar performance under traditional evaluation as optimal traditional classifiers (Blanchard et al.~\citeyear{Blanchard2010}; Menon et al.~\citeyear{Menon2015}). The intuition for these results comes from the fact that in many practical situations, the posterior distributions in traditional and non-traditional setting provide the same optimal ranking of data points on a given test sample \cite{Jain2016,jain2016estimating}. Furthermore, the widely-accepted evaluation approaches using ROC or pr-rc curves are insensitive to the variation of raw prediction scores unless they affect the ranking.

Though the efficacy of non-traditional classifiers has been thoroughly studied \cite{Peng2003,Elkan2008,Ward2009,Menon2015}, estimating their true performance has been much less explored. Such performance estimation often involves computing the fraction(s) of correctly and incorrectly classified examples from both classes; however, in absence of labeled negatives, the fractions computed under the non-traditional evaluation are incorrect, resulting in biased estimates. Figure \ref{fig:roc} illustrates the effect of this bias by showing the traditional and non-traditional ROC curves on a handmade data set. Because some of the unlabeled examples in the training set are in fact positive, the area under the ROC curve estimated when the unlabeled examples were considered negative (non-traditional setting) underestimates the true performance for positive versus negative classification (traditional setting).

This paper formalizes and evaluates performance estimation of a non-traditional classifier in the traditional setting when the only available training data are (possibly noisy) positive examples and unlabeled data. We show that the true (traditional) performance of such a classifier can be recovered with the knowledge of class priors and the fraction of mislabeled examples in the positive set. We derive formulas for converting the ROC and pr-rc curves from the non-traditional to the traditional setting. Using these recovery formulas, we present methods to estimate true classification performance. Our experiments provide evidence that the methods for the recovery of a classifier's performance are sound and effective.

\renewcommand\floatpagefraction{.9}
\renewcommand\dblfloatpagefraction{.9} 
\renewcommand\topfraction{.9}
\renewcommand\dbltopfraction{.9} 
\renewcommand\bottomfraction{.9}
\renewcommand\textfraction{.1}   
\setcounter{totalnumber}{50}
\setcounter{topnumber}{50}
\setcounter{bottomnumber}{50}
\begin{figure}[h]
\includegraphics[width = \columnwidth]{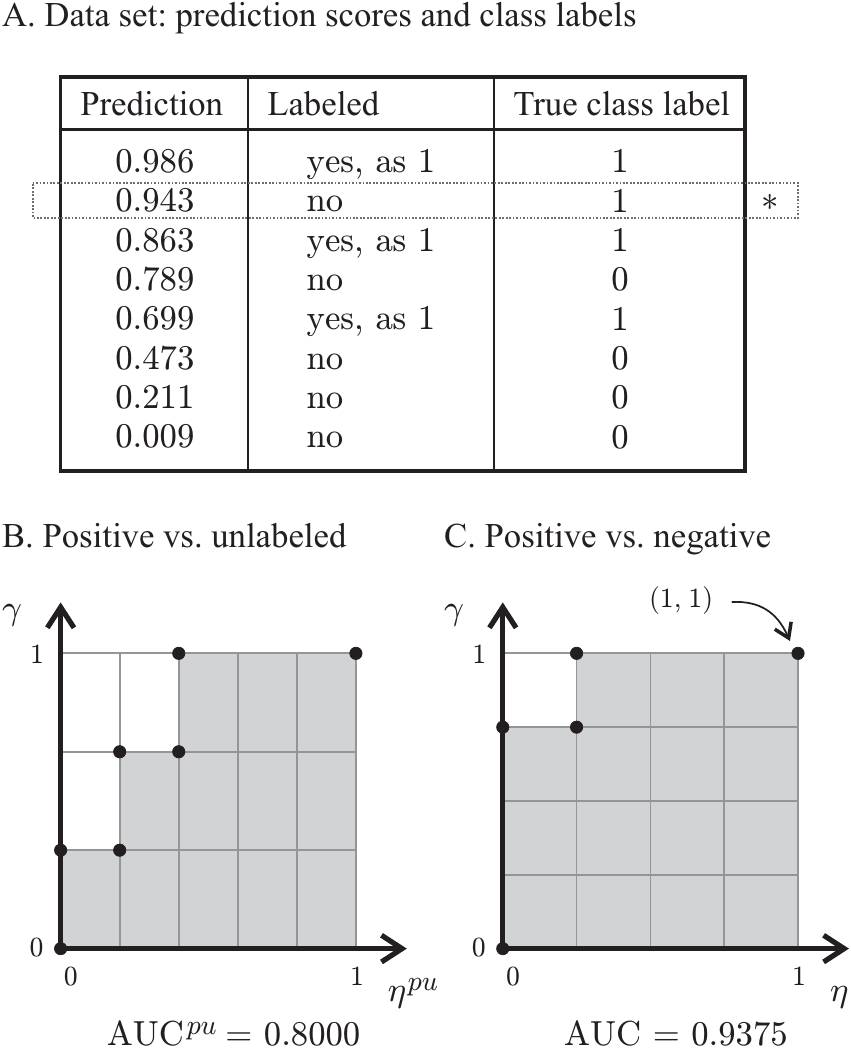}
\protect\caption{\small Illustration of the difference in classifier evaluation. (A) A data set with eight examples, three labeled positive and five unlabeled. One unlabeled example is positive (marked by an asterisk outside the table), whereas four are negative. A prediction score between zero and one is provided for each example. (B) The ROC plot ($\gamma$ = true positive rate; $\eta$ = false positive rate) when all unlabeled examples are considered negative. 
(C) The true ROC plot where all examples are correctly labeled. 
The areas under the ROC curves are calculated without interpolation as the total area of shaded boxes.}
\label{fig:roc}

\end{figure}

\section{Problem formulation}

Consider a binary classification problem from input $x\in\xset$ to output $y \in \yset=\{0,1\}$ in a positive-unlabeled setting. Let $\mixu$ be the true distribution over the input space $\xset$ from which the unlabeled sample is drawn and let $\cone$ and $\czero$ be the distributions of the positive and negative examples, respectively. It follows that $\mixu$ can be expressed as a two-component mixture containing $\cone$ and $\czero$ as
$$f(x)=\mpu \cone(x) + (1-\mpu)\czero(x),$$
for all $x\in \xset$ where $\mpu \in [0,1)$ is the mixing proportion (positive class prior) giving the proportion of positives in $\mixu$. 

Let now $\mixl$ be the distribution over $\xset$ from which the labeled sample is drawn. We similarly express $\mixl$ as a two-component mixture containing $\cone$ and $\czero$ as
$$g(x)=\mpl \cone(x) + (1-\mpl)\czero(x),$$
for all $x\in \xset$ where $\mpl \in (\mpu,1]$ gives the proportion of positives in labeled data. All labeled examples are labeled as positives; thus, when $\mpl = 1$ we say that the labeled data is clean. When $\mpl < 1$, the labeled data contains a fraction ($1-\mpl$) of negatives that are in this case mislabeled. We will refer to the latter scenario as the noisy positive setting.

Let $\Xl$ be the (positively) labeled sample drawn according to $\mixl(x)$ and $\Xu$ be the unlabeled sample drawn according to $\mixu(x)$. The learning objective is to train a classifier that discriminates between positive and negative data and estimate its performance. However, we can only train a non-traditional classifier $\cls: \xset \rightarrow \yset$ between labeled and unlabeled data and estimate its performance by considering that all labeled data are positive and all unlabeled data are negative. We refer to the performance of $\cls(x)$ directly estimated from samples $\Xl$ and $X$ as $\textrm{perf}^{\, pu}$. Given a non-traditional classifier $\cls(x)$ and its performance $\textrm{perf}^{\, pu}$, the main goal of this work is to estimate (recover) its performance in the traditional setting; i.e., its performance as a discriminator between positive and negative data.

\section{Methods}

We consider a family of binary classifiers that map $\xset$ into $\yset$. To simplify the presentation, we can think of the entire family as generated from a single model that maps $\xset$ into $\mathbb{R}$, where each individual classifier corresponds to a decision threshold picked from $\mathbb{R}$. The classifier gives the positive class `1' when the model's output is above the threshold and the negative class `0' otherwise. 

The true positive rate (sensitivity, recall) of each classifier is defined as the probability of correctly predicting a positive example; the true negative rate (specificity) is defined as the probability of correctly predicting a negative example; the false positive rate is defined as 1~$-$~specificity, and the false negative rate is defined as 1~$-$~sensitivity. Finally, the precision is defined as the probability that a positive prediction is correct; conversely, the false discovery rate is defined as 1~$-$~precision (Hastie et al. \citeyear{Hastie2001}). Given a test set, each of the quantities above is estimated using relative frequencies. In this setup, each classifier corresponds to a single confusion matrix, whereas the entire family of classifiers corresponds to a particular ROC curve and a particular pr-rc curve \cite{Fawcett2006}. The two main performance criteria considered in this work are the area under the ROC curve and the area under the pr-rc curve.

\subsection{The case of clean positive data}

We first consider the setting of clean positive data, where
the labeled data does not incorrectly contain negatives ($\beta = 1$),
to provide intuition before moving to the more general noisy-positive setting. For a classifier $\cls: \xset \rightarrow \yset$, the true positive rate, $\tpr$, and false positive rate, $\fpr$, can be defined as
\begin{align*}
    \tpr&= \EE_{\cone} [\cls(x)] \\
    \fpr&= \EE_{\czero} [\cls(x)],
\end{align*}
where $\EE_f$ denotes expectation with respect to a distribution $f$. The goal is to estimate these values, despite the fact that we only have access to positive labels. 

The true positive rate can be estimated as the empirical mean of $\cls(x)$ over the positively labeled sample $\Xone$
\begin{equation*}
\hat{\tpr}=\frac{1}{|\Xone|}\sum_{x \in \Xone}\cls(x)
\end{equation*}
because $\Xone$ was sampled from $\cone$. The false positive rate, however, cannot be so simply estimated, because we do not have access to a sample from $\czero$. Further, this prevents the estimation of the ROC curve and the area under this curve (AUC). Typically, ROC curves and AUCs are reported based only on the performance of the non-traditional positive-unlabeled classifier, $\cls$, on discriminating between positives and unlabeled data. The ROC curve for the positive versus unlabeled classification, \rocPU, can be estimated by plotting $\hat{\tpr}$ against $\fprPUhat$ across different classifiers, where $\fprPUhat$, an estimate of $\fprPU=\EE_\mixu[\cls(x)]$, can be estimated using the unlabeled sample (which corresponds to the negative sample for the non-traditional positive-unlabeled classifier $\cls$):
\begin{equation*}
\fprPUhat=\frac{1}{|\Xu|}\sum_{x \in \Xu}\cls(\xu).
\end{equation*}
This curve, however, does not represent the true performance of $\cls$ for positive versus negative classification. Similar difficulties exist in estimating the precision 
$$\pr = \frac{\mpu \EE_{\cone}[\cls(x)]}{\EE_\mixu[\cls(x)]}$$
that requires the positive class prior $\alpha$, though recall, which is equal to $\tpr$, can be directly estimated.

Of key interest, therefore, is a correction approach that provides an estimate of the true performance. We provide just such a result in Theorem 1 below for the more general setting of noisy positives (see next Section). Using this theorem for $\beta = 1$, for example, we can express the false positive rate $\fpr$ in terms of the positive-unlabeled false positive rate, $\fprPU$ as\footnote{Iakoucheva et al.~\shortcite{Iakoucheva2004} also provide this result for uncorrupted positive data.}
\begin{equation*}
{\fpr}= \frac{\fprPU - {\mpu}{\tpr}}{1-{\mpu}},
\end{equation*}
and the AUC of the classifier on the positive-negative classification problem in terms of the AUC of the classifier on the positive-unlabeled classification problem\footnote{Menon et al.~\shortcite{Menon2015} provide an equivalent formula for the AUC. In Theorem 1, we give a full derivation from the probabilistic definition of the AUC and conversion formulas for other measures.}:
\begin{equation*}
{\aucPN} = \frac{{\aucPU} -\frac{\mpu}{2}}{1-\mpu}.
\end{equation*}
Therefore, given estimates of $\mpu, \tpr, \fprPU$ and $\aucPU$, we can obtain estimates of $\aucPN$ and the precision. 
In the next Section, we present this key result that enables this conversion and also shows that the estimated \aucPN\ is better than \aucPU.

\subsection{The case of (possibly) noisy positive data}

In this section we consider a more general case where the labeled sample of positives is allowed to be noisy; i.e., some positives may actually be negatives. Since this setting is a strict generalization of the previous discussion, we will overload terminology and use $\fprLU$ again as the positive-unlabeled false positive rate. 

In addition to previous difficulties, we now also cannot estimate the true positive rate $\tpr$, because we do not have access to an unbiased sample from $\cone$; rather, we only have access to a sample contaminated with negatives. Nonetheless, we can express all of the desired rates in terms of only rates for the non-traditional classifier.
\begin{theorem}\label{thm_main}
For a given classifier $\cls:\xset \rightarrow \yset$, the true positive rate $\tpr$ and the false positive rate $\fpr$ can be expressed in terms of the positive-unlabeled $\tprLU$ and $\fprLU$
\begin{align}
{\tpr}&= \frac{(1-{\mpu})\tprLU-(1-{\mpl})\fprLU}{{\mpl}-{\mpu}} \label{eq:tprLU}\\
{\fpr}&= \frac{{\mpl}\fprLU - {\mpu}\tprLU}{{\mpl}-{\mpu}} \label{eq:fprLU}
.
\end{align}
The precision $\pr$ can either be converted from a positive-unlabeled precision $\prLU$, with $c=\nicefrac{|\Xone|}{(|\Xu|+|\Xone|)}$, as
\begin{equation*}
 \pr= \frac{\mpu(1-\mpu)}{\mpl-\mpu}\paren*{\frac{1-c}{c}\paren*{\frac{\prLU}{1-\prLU}}-\frac{1-\mpl}{1-\mpu}}
 \end{equation*}
or computed directly as
\begin{equation}
\label{eq:pr1}
\pr = \frac{\mpu \tpr}{\fprLU}
.
\end{equation}
Further, consider a family of classifiers $\mc{F}=\braces*{\cls_\fpr}$ indexed by $\fpr \in [0,1]$ where $\fpr$ is the false positive rate of $\cls_\fpr$. Then for the ROC curve obtained from varying $\eta$, 
the $\aucPN$ can be expressed in terms of the positive-unlabeled $\aucPU$ as
\begin{equation}
\text{\aucPN} = \frac{\text{\aucLU} -\frac{1-(\mpl-\mpu)}{2}}{\mpl-\mpu} \label{eq_auc}
.
\end{equation}
Moreover, $\text{\aucPN} > \text{\aucLU}$, if and only if $\text{\aucLU} > \nicefrac{1}{2}$ and $\mpl -\mpu < 1$.
\end{theorem}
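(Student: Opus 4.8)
The plan is to treat the final (``Moreover'') claim as a one-line consequence of the AUC conversion formula~\eqref{eq_auc}, so the work splits into (i) establishing~\eqref{eq_auc} and (ii) a short sign analysis.

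For~\eqref{eq_auc} I would start from the probabilistic characterization of the AUC: $\aucPN = \Pr[s(X^+)>s(X^-)]$ for independent $X^+\sim\cone$ and $X^-\sim\czero$ (with the usual $\tfrac12$ weight for ties), and likewise $\aucPU = \Pr[s(X_g)>s(X_f)]$ for independent $X_g\sim\mixl$ and $X_f\sim\mixu$, where $s$ is the underlying scoring model. Expanding $\mixl=\mpl\cone+(1-\mpl)\czero$ and $\mixu=\mpu\cone+(1-\mpu)\czero$ and conditioning on which component each of $X_g,X_f$ is drawn from breaks $\aucPU$ into four terms with mixing weights $\mpl\mpu$, $\mpl(1-\mpu)$, $(1-\mpl)\mpu$, $(1-\mpl)(1-\mpu)$ and conditional probabilities $\tfrac12$, $\aucPN$, $1-\aucPN$, $\tfrac12$. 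Collecting terms yields $\aucPU=\tfrac12-\tfrac{\mpl-\mpu}{2}+(\mpl-\mpu)\aucPN$, which is exactly~\eqref{eq_auc} after rearranging. A geometric alternative: the map $(\fpr,\tpr)\mapsto(\fprLU,\tprLU)$ is linear with determinant $\mpl-\mpu$ and fixes $(0,0)$, $(1,1)$ and the whole diagonal; it therefore carries the region between the traditional ROC curve and the diagonal onto the region between the positive-unlabeled ROC curve and the diagonal, multiplying signed area by $\mpl-\mpu$. Since that signed area equals $\aucPN-\tfrac12$ in one coordinate system and $\aucPU-\tfrac12$ in the other, Green's theorem gives the same identity.

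Granting~\eqref{eq_auc}, rewrite it as $\aucPN=\tfrac12+\tfrac{2\aucPU-1}{2(\mpl-\mpu)}$ and subtract $\aucPU$:
\[
\aucPN-\aucPU=(2\aucPU-1)\paren*{\frac{1}{2(\mpl-\mpu)}-\frac12}=\frac{(2\aucPU-1)\bigl(1-(\mpl-\mpu)\bigr)}{2(\mpl-\mpu)}.
\]
Because $\mpl\in(\mpu,1]$ forces $0<\mpl-\mpu\le1$, the denominator is positive and the factor $1-(\mpl-\mpu)$ is nonnegative; hence the right-hand side is strictly positive exactly when $2\aucPU-1>0$ and $1-(\mpl-\mpu)>0$, i.e.\ when $\aucPU>\tfrac12$ and $\mpl-\mpu<1$. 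That is the asserted equivalence.

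I expect the real obstacle to be~\eqref{eq_auc} itself rather than the sign step: in the probabilistic route one has to handle score ties and keep the independence and mixture bookkeeping straight, and in the geometric route one has to verify that the positive-unlabeled curve is a genuine ROC curve---a nondecreasing function of $\fprLU$ running from $(0,0)$ to $(1,1)$---so that ``area under it minus $\tfrac12$'' really is the signed area that the linear map rescales. Once~\eqref{eq_auc} is secured, the ``Moreover'' claim is immediate from the displayed factorization.
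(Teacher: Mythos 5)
Your proposal covers only part of the theorem: you establish \eqref{eq_auc} and the ``Moreover'' claim, but the statement also asserts the rate conversions \eqref{eq:tprLU}--\eqref{eq:fprLU} and the two precision formulas, and your write-up never proves these. The rate conversions are quick---expand the mixtures to get $\fprLU=\EE_{\mixu}[\cls(x)]=\mpu\tpr+(1-\mpu)\fpr$ and $\tprLU=\EE_{\mixl}[\cls(x)]=\mpl\tpr+(1-\mpl)\fpr$, then invert the $2\times 2$ system (you implicitly use exactly this linear map in your geometric argument, so the omission looks like an oversight rather than a misunderstanding)---and \eqref{eq:pr1} is immediate from the definition of $\pr$. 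But the conversion of $\pr$ from $\prLU$ needs a separate computation, namely writing $\prLU=\bigl(1+\tfrac{1-c}{c}\,\EE_{\mixu}[\cls(x)]/\EE_{\mixl}[\cls(x)]\bigr)^{-1}$ and showing $\EE_{\mixl}[\cls(x)]/\EE_{\mixu}[\cls(x)]=\tfrac{\mpl-\mpu}{\mpu(1-\mpu)}\pr+\tfrac{1-\mpl}{1-\mpu}$; that step is genuinely absent from your plan.

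For the part you do address, your argument is correct but takes a different route from the paper. The paper computes $\aucPU=\int_0^1\tprLU(\fpr)\,\tfrac{d\fprLU(\fpr)}{d\fpr}\,d\fpr$, expands the integrand into four terms via the linear relations above, and evaluates each integral (one by integration by parts). Your probabilistic decomposition---conditioning $\Pr[s(X_g)>s(X_f)]$ on the mixture component of each sample, with weights $\mpl\mpu$, $\mpl(1-\mpu)$, $(1-\mpl)\mpu$, $(1-\mpl)(1-\mpu)$ and conditional probabilities $\nicefrac{1}{2}$, $\aucPN$, $1-\aucPN$, $\nicefrac{1}{2}$---does collect to $\tfrac{1-(\mpl-\mpu)}{2}+(\mpl-\mpu)\aucPN$, so it reproduces \eqref{eq_auc}; the algebra checks out. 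It is arguably cleaner, at the price of assuming tie-free scores and invoking the identification of area under the ROC curve with the ranking probability, which the paper's integral route does not need. The geometric alternative (determinant $\mpl-\mpu$, diagonal fixed, signed area between curve and diagonal rescaled) is also sound provided, as you note, that $\fprLU$ is increasing in $\fpr$ so the image is a genuine ROC curve; this follows from $\tfrac{d\fprLU}{d\fpr}=\mpu\tfrac{d\tpr}{d\fpr}+(1-\mpu)>0$. Your sign analysis is algebraically identical to the paper's factorization $\aucPN-\aucPU=\tfrac{1-(\mpl-\mpu)}{\mpl-\mpu}\paren*{\aucPU-\nicefrac{1}{2}}$ and correctly uses $0<\mpl-\mpu\le 1$.
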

\begin{proof}
\begin{align*}
    \fprLU&= \EE_{\mixu} [\cls(x)]\\
        &= \mpu  \EE_{\cone} [\cls(x)] + (1-\mpu) \EE_{\czero} [\cls(x)]\\
        &= \mpu  \tpr + (1-\mpu) \fpr.
\end{align*}
Similarly, we can obtain the true positive rate
\begin{align*}
    \tprLU&= \EE_{\mixl} [\cls(x)]\\
        &= \mpl  \EE_{\cone} [\cls(x)] + (1-\mpl) \EE_{\czero} [\cls(x)]\\
        &= \mpl  \tpr + (1-\mpl) \fpr.
\end{align*}
We can then solve for $\fpr$ and $\tpr$ to get
the result.

Next, we consider the precision. We can directly re-express the precision as

\begin{equation*}
\pr = \frac{\mpu \EE_{\cone}[\cls(x)]}{\EE_\mixu[\cls(x)]} =\frac{\mpu \tpr}{\fprLU}.
\end{equation*}
To obtain a conversion from $\prLU$, first consider
\begin{align*}
\prPU &= \frac{c \EE_{\mixl}[\cls(x)]}{\EE_{c\mixl+(1-c)\mixu}[\cls(x)]}\\
    &=\frac{c \EE_{\mixl}[\cls(x)]}{c \EE_{\mixl}[\cls(x)] + (1-c)\EE_{\mixu}[\cls(x)]}\\
    &=\frac{1}{1  + \frac{1-c}{c}\frac{\EE_{\mixu}[\cls(x)]}{\EE_{\mixl}[\cls(x)]}}
\end{align*}
We can express a component of this as
\begin{align*}
\frac{\EE_{\mixl}[\cls(x)]}{\EE_{\mixu}[\cls(x)]} \! &= \frac{\mpl\EE_{\cone}[\cls(x)]  + (1-\mpl)\EE_{\czero}[\cls(x)]}{\EE_{\mixu}[\cls(x)]}\\
  &= \frac{\mpl}{\mpu}\frac{\mpu\EE_{\cone}[\cls(x)]}{\EE_{\mixu}[\cls(x)]}  + \frac{1-\mpl}{1-\mpu} \frac{(1-\mpu)\EE_{\czero}[\cls(x)]}{\EE_{\mixu}[\cls(x)]}\\
    &=\frac{\mpl}{\mpu}\pr  + \frac{1-\mpl}{1-\mpu} (1-\pr)\\
    &=\frac{\mpl-\mpu}{\mpu(1-\mpu)}\pr + \frac{1-\mpl}{1-\mpu}
\end{align*}
where rearranging gives the result.

Next, we derive an equation that allows estimation of the \aucPN\ directly from the \aucLU, $\mpu$ and $\mpl$. Consider a family of classifiers $\mc{F}=\braces*{\cls_\fpr}$ indexed by $\fpr \in [0,1]$ where $\fpr$ is the false positive rate of $\cls_\fpr$. We can express the $\tpr,\fprLU,\tprLU$ of $\cls_\fpr$ as a function of $\fpr$ as follows:
\begin{align*}
\tpr(\fpr)&= \EE_{\cone} [\cls_\fpr(x)],\\
\fprLU(\fpr)&= \EE_{\mixu} [\cls_\fpr(x)]\\
            &= \mpu \tpr(\fpr) + (1-\mpu) \fpr,\\
\tprLU(\fpr)&= \EE_{\mixl} [\cls_\fpr(x)]\\
            &= \mpl \tpr(\fpr) + (1-\mpl) \fpr.
\end{align*}
\newcommand{\pluspace}{\ \ \ \ }  
By definition, the expression for \aucLU\ is
\begin{align*}
  \!\!\!\!\!\! \aucLU \! &=\int_0^1 \tprLU(\fpr) \frac{d\fprLU(\fpr)}{d\fpr} d\fpr\\
                &= \int_0^1 \paren*{\mpl \tpr(\fpr) + (1-\mpl) \fpr} \paren*{\mpu\frac{d\tpr(\fpr)}{d\fpr}+(1-\mpu)} d\fpr\\
                &= \mpu\mpl\int_0^1 \tpr(\fpr) \frac{d\tpr(\fpr)}{d\fpr} d\fpr + (1-\mpu)\mpl\int_0^1 \tpr(\fpr)d\fpr \\
                &\pluspace+ \mpu(1-\mpl)\int_0^1 \hspace{-4pt}\fpr \frac{d\tpr(\fpr)}{d\fpr} d\fpr+ (1-\mpu)(1-\mpl)\int_0^1 \hspace{-4pt} \fpr d\fpr\\
\end{align*}
Now solving for each integral, we obtain
\begin{align*}
    \text{\aucLU}
                &=\frac{\mpu\mpl}{2} [\tpr^2(1)-\tpr^2(0)] +(1-\mpu)\mpl\text{\aucPN} \\
                &\pluspace+ \mpu(1-\mpl)\brac*{\brac*{\fpr\tpr(\fpr)}_0^1 - \int_0^1 \tpr(\fpr) d\fpr}\\
                &\pluspace+ \frac{(1-\mpu)(1-\mpl)}{2}[1^2 -0^2]\\
                 &=\frac{\mpu\mpl+2\mpu(1-\mpl)+(1-\mpu)(1-\mpl)}{2} \\
                 &\pluspace+\brac*{(1-\mpu)\mpl -\mpu(1-\mpl)} \text{\aucPN}\\
                &=\frac{1-(\mpl-\mpu)}{2} + (\mpl-\mpu)\text{\aucPN}
\end{align*}
Rearranging the terms gives the desired result. Finally, from Equation \ref{eq_auc}, we see that
\begin{equation*}
    \text{\aucPN} - \text{\aucLU} = \frac{1-(\mpl-\mpu)}{\mpl-\mpu}\paren*{\text{\aucLU} -\frac{1}{2}}\\
\end{equation*}
proving $\text{\aucPN} > \text{\aucLU}$, if and only if $\text{\aucLU} > \nicefrac{1}{2}$ and $\mpl -\mpu < 1$.
\end{proof}

\section{Experiments and results}

\subsection{Data sets and classification models}
Our estimators were evaluated using twelve real-life data sets from the UCI Machine Learning Repository \cite{Lichman2013}. All data sets were appropriately modified for binary classification; e.g., regression problems were converted into classification problems based on the mean of the target variable, whereas multiclass classification problems were converted into binary problems by combining classes. When needed, categorical features were converted into numerical features based on the sparse binary representation. 

Classifiers were constructed as ensembles of 100 feed-forward neural networks \cite{Breiman1996}. Each network had five hidden neurons and was trained using resilient propagation \cite{Riedmiller1993}. A validation set containing 25\% of the training data was used to terminate training. For simplicity, no training parameters were varied. Accuracies were estimated using the out-of-bag approach.

\subsection{Experimental protocols}
\label{sec:EP}
To evaluate the quality of performance estimation we first established the ground truth performance of a model by estimating accuracy in a standard supervised setting. All positive examples in all data sets were considered positive and all negative examples were considered negative. A model was then constructed and evaluated for its performance. 

We next simulated the positive-unlabeled setting where we randomly included 1,000 examples (or 100 for smaller data sets) in the positive data set $\Xl$. The number of actual positive examples in each labeled set was a function of parameter $\beta \in \{1, 0.95, 0.75 \}$. For example, when $\beta=1$, all positively labeled examples were positive, and when $\beta<1$, an appropriate fraction of the (positively) labeled data set $\Xl$ was filled with negatives. The remaining examples (positive and negative) were declared unlabeled (data set $\Xu$). The size of the unlabeled data was limited to 10,000 (where relevant) and the fraction of positives in the unlabeled data was used as true $\alpha$. Using all positively labeled examples as positives and all unlabeled examples as negatives, we then estimated the performance of the model in the positive-unlabeled setting. All experiments were repeated fifty times by randomly selecting positives and negatives for the labeled data.

We used our methodology from the previous Section to recover the true accuracy of a model. To recover the area under the ROC curve, we used the \emph{direct conversion} (D) from Equation \ref{eq_auc} as well as \emph{indirect conversion} (I) where traditional true positive and false positive rates were recovered using Equations \ref{eq:tprLU}-\ref{eq:fprLU} for every threshold and then used to reconstruct the ROC curve. In the case of recovering the pr-rc curve, only the indirect conversion was used (using Equations \ref{eq:tprLU} and \ref{eq:pr1}) as no direct conversion formula is known to us. The full algorithm for the indirect recovery is given in the arXiv supplement of this paper.

All experiments were carried out ($i$) by assuming that the class prior $\alpha$ and noise fraction $\beta$ were known (R), and ($ii$) by estimating $\alpha$ and $\beta$ from positive and unlabeled data (E). These experiments were carried out to quantify the performance loss due to the inability to perfectly estimate $(\alpha ,\beta)$. Class priors and noise fraction were estimated using the AlphaMax algorithm \cite{Jain2016,jain2016estimating}. Several recent studies have determined good performance of AlphaMax (Jain et al.~\citeyear{Jain2016}; Jain, White, and Radivojac~\citeyear{jain2016estimating}; Ramaswamy et al.~\citeyear{ramaswamy2016mixture}), in both clean and noisy setting.

The direct recovery methods using real and estimated $(\alpha ,\beta)$ are hereafter referred to as DR and DE methods, respectively, whereas the indirect recovery methods are similarly referred to as IR and IE methods. All four approaches were used to evaluate the estimated AUCs and only IR and IE methods were used to evaluate the estimated area under the pr-rc curve (AUC-PR). 

\subsection{Results}

\begin{figure*}
\includegraphics[width=\textwidth]{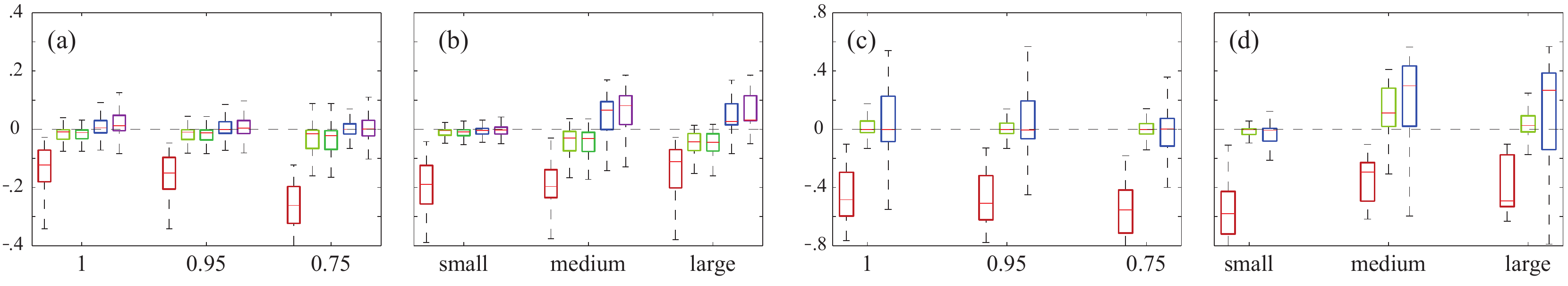}
\vspace{-0.3cm}
\caption{The distribution of error of AUC (a, b) and AUC-PR (c, d) estimators on the data generated from the 12 datasets.
\textcolor[rgb]{1,0,0}{PU} represents the estimates on the positive unlabeled data without correction.  \textcolor[rgb]{0.75,0.75,0}{IR}, \textcolor[rgb]{0,0.5,0}{DR}, \textcolor[rgb]{0,0,1.0}{IE}, \textcolor[rgb]{0.75,0,0.75}{DE} are the corrected estimates,
either using the {\bf R}eal values of $(\mpu,\mpl)$ or the {\bf E}stimated values. \textbf{D} indicates that the $\aucLU$ was {\bf D}irectly corrected using equation \eqref{eq_auc} (direct conversion is not done for AUC-PR) and \textbf{I} indicates {\textbf I}ndirect correction by first correcting for the ROC or pr-rc curves. AUC estimates above $1$ were clipped. The x-axis for the left column is the real value of $\mpl$ (in increasing noise order) and for the right column it is the absolute error of $\mpl -\mpu$ estimate binned into small: $[0,0.1)$, medium: $[0.1,0.2)$ and large: $[0.2,\infty)$.}
\end{figure*}

Figure 2 shows the general trends in estimating AUC and AUC-PR over all data sets. Detailed dataset-specific evaluations over all summary statistics are given in Tables 1-2, while the error between the true and recovered performance is further characterized in Figures 3-4. Tables 1-2 and Figures 3-4 are shown in the arXiv supplement of this paper.

Figure 2(a) shows that, as expected, $\aucLU$ consistently underestimates the true performance. Moreover, it deteriorates with increase in noise. On the other hand, using the correct values for $\alpha$ and $\beta$ (IR and DR, corresponding to the yellow and green boxes) leads to excellent performance over all values of $\beta$. Replacing the true $(\alpha,\beta)$ by their estimates obtained from AlphaMax did not lead to significantly different performance estimates (IE and DE, corresponding to the blue and purple boxes). Since class prior estimation guarantees identifiability of only the upper bounds of $(\alpha,\beta)$, the observed differences are reasonable. Although the aggregate performance of direct and indirect estimation is similar, a detailed comparison between these methods (DR vs. IR and DE vs. IE) provides evidence that the indirect method was superior in both cases ($P = 6.5 \cdot 10^{-6}$ for real $\alpha$ and $\beta$ and $P = 5.7 \cdot 10^{-3}$ for estimated $\alpha$ and $\beta$; one-sided binomial test). Full details of these comparisons are shown in the arXiv supplement. 

Figure 2(b) shows that the performance breaks down with increase in the absolute error of estimates of $\beta - \alpha$. We selected this criterion because the term $\beta - \alpha$ appears in the denominator of Equation \ref{eq_auc} and thus could significantly influence the quality of performance. The increase in error more strongly affects the estimators with approximate $(\alpha,\beta$). Interestingly, the estimators IR and DR both underestimate, and IE and DE both overestimate. We note that in some cases the data sets obtained from UCI Machine Learning Repository may not be perfectly labeled in the first place. This suggests that our ground truth performance might be slightly biased for some data sets which would lead to a situation that the estimated performance is in fact more accurate than observed.

Figures 2(c) and 2(d) show the equivalent plots for AUC-PR from which similar conclusions can be drawn. However, errors in the uncorrected AUC-PR estimates (red boxes) are much higher in comparison. Estimating AUC-PR is therefore not particularly meaningful in the non-traditional setting because precision is sensitive to the proportion of labeled positives in the data set; i.e., $\nicefrac{|\Xone|}{(|\Xu|+|\Xone|)}$, whereas $\tpr$ and $\fpr$ are not.

\section{Related work}

\subsection{Evaluation metrics}
Two-dimensional performance characterization such as ROC or pr-rc curves and the summary statistics based on them have become mainstream in empirical evaluation of classification performance \cite{Flach2003,Fawcett2006,Davis2006,Boyd2012,Clark2013,Flach2015}. Of particular interest to our work is the well-explored relationship between these performance metrics and class priors. For example, Hern\'{a}ndez-Orallo et al.~\shortcite{Hernandez2012} use class priors and area under the ROC curve to compute the expected classification accuracy, whereas Boyd et al.~\shortcite{Boyd2012} relate class priors to the size of the unachievable region in pr-rc space. In the domain of positive-unlabeled learning, Menon et al.~\shortcite{Menon2015} give the relationship between traditionally and non-traditionally evaluated balanced error rates and AUCs of a given classifier. They use this relationship to demonstrate that constructing a non-traditional classifier by optimizing non-traditional AUC results in an optimal traditional AUC. Claesen et al.~\shortcite{Claesen2015} similarly argue the importance of class priors and show how to compute bounds on the true ROC or pr-rc curves. In contrast, our approach directly estimates the unknown statistics and derives a closed-form conversion formula for recovering the area under the ROC curve from the first principles. Another similar work, although in the area of structured-output learning, is by Jiang et al.~\shortcite{Jiang2014} who studied the impact of sequential completion of the (structured) target variable; however, their work makes fewer assumptions on the data distributions and does not lead to the recovery of true performance.

\subsection{Class prior and noise estimation}
Though class prior ($\mpu$) estimation in positive-unlabeled learning is nontrivial, several algorithms have recently emerged in the literature. Elkan and Noto~\shortcite{Elkan2008} estimate the priors from the probability obtained by calibrating the scores of a non-traditional classifier under strong assumptions that the class-conditional distributions do not overlap. The same assumptions are used by \cite{duPlessis2014b} who estimate the class prior as the minimizer of the Pearson divergence. du Plessis et al.~\shortcite{du2015class} improve the method by using penalized \emph{f}-divergence to allow overlap. Blanchard et al.~\shortcite{Blanchard2010} and Jain et al.~\shortcite{Jain2016} showed that class prior estimation, in general, is an ill-posed problem and introduce an ``irreducibility`` constraint on the distribution of the negatives that makes the problem well defined. Blanchard et al.~\shortcite{Blanchard2010} estimate the class prior as the slope of the right endpoint of the empirical ROC curve from non-traditional classifiers while Sanderson and Scott~\shortcite{sanderson2014class} use a fitted curve instead of the actual ROC curve to smooth large noise at endpoints. Loosely speaking, the ROC approach is based on the fact that the class prior under the irreducibility assumption is the minimum value attained by the ratio of the unlabeled and positive sample densities \cite{Jain2016}. Jain et al.~\shortcite{Jain2016} also give an algorithm, \AlgName, a nonparametric maximum likelihood based approach suitable for high-dimensional data. Ramaswamy et al.~\shortcite{ramaswamy2016mixture} give an algorithm based on embedding distributions into a reproducing kernel Hilbert spaces. 

In the case of noisy positives, Scott et al.~\shortcite{Scott2013} and Jain, White, and Radivojac~\shortcite{jain2016estimating} impose a ``mutual irreducibility'' constraint on the distribution of positives and negatives, to make the class prior and the noise proportion estimation well defined. Jain, White, and Radivojac~\shortcite{jain2016estimating} estimate $\mpu,\mpl$ by combining the outputs of two executions of \AlgName, one of which flips the role of positive and unlabeled samples.   

\section{Conclusions}
In this paper we propose simple methods for correcting the estimated performance of classifiers trained in the positive-unlabeled setting. We prove a fundamental result about the relationship between widely-used performance measures and their positive-unlabeled counterparts. The resulting estimators were evaluated over a diverse group of data sets to show that it is feasible and practical to obtain accurate estimates of a classifier's performance in the task of discriminating positive and negative examples. 

The corrected performance measures were uniformly more accurate than the positive-unlabeled estimates, which typically underestimated the performance. Furthermore, we showed that the indirect method for performance recovery outperformed the direct method. This notwithstanding, we do not recommend stopping the established practice of reporting $\textrm{perf}^{\, pu}$; rather we propose that the corrected performance measures should also be provided. In domains where $\alpha$ and $\beta$ are unknown, such estimates will contribute to a better understanding of a classifier's performance and a deeper understanding of the domain itself.

\section{Acknowledgements}
We thank Prof.~Michael W.~Trosset, Kymerleigh A. Pagel and Vikas Pejaver for helpful comments. Grant support: NSF DBI-1458477, NIH R01MH105524, NIH R01GM103725, and the Indiana University Precision Health Initiative.

{\small
\bibliography{bibliography}

\begin{thebibliography}{}

\bibitem[\protect\citeauthoryear{Blanchard, Lee, and
  Scott}{2010}]{Blanchard2010}
Blanchard, G.; Lee, G.; and Scott, C.
\newblock 2010.
\newblock Semi-supervised novelty detection.
\newblock {\em J Mach Learn Res} 11:2973--3009.

\bibitem[\protect\citeauthoryear{Blum and Mitchell}{1998}]{Blum1998}
Blum, A., and Mitchell, T.
\newblock 1998.
\newblock Combining labeled and unlabeled data with co-training.
\newblock In {\em Proceedings of the 11th Annual Conference on Computational
  Learning Theory}, COLT 1998,  92--100.

\bibitem[\protect\citeauthoryear{Boyd \bgroup et al\mbox.\egroup
  }{2012}]{Boyd2012}
Boyd, K.; Costa, V.~S.; Davis, J.; and Page, C.~D.
\newblock 2012.
\newblock Unachievable region in precision-recall space and its effect on
  empirical evaluation.
\newblock In {\em Proceedings of the 29th International Conference on Machine
  Learning}, ICML 2012,  639--646.

\bibitem[\protect\citeauthoryear{Breiman}{1996}]{Breiman1996}
Breiman, L.
\newblock 1996.
\newblock Bagging predictors.
\newblock {\em Mach Learn} 24:123--140.

\bibitem[\protect\citeauthoryear{Claesen \bgroup et al\mbox.\egroup
  }{2015}]{Claesen2015}
Claesen, M.; Davis, J.; De~Smet, F.; and De~Moor, B.
\newblock 2015.
\newblock Assessing binary classifiers using only positive and unlabeled data.
\newblock {\em arXiv preprint arXiv:1504.06837}.

\bibitem[\protect\citeauthoryear{Clark and Radivojac}{2013}]{Clark2013}
Clark, W.~T., and Radivojac, P.
\newblock 2013.
\newblock Information-theoretic evaluation of predicted ontological
  annotations.
\newblock {\em Bioinformatics} 29(13):i53--i61.

\bibitem[\protect\citeauthoryear{Cortes \bgroup et al\mbox.\egroup
  }{2008}]{Cortes2008}
Cortes, C.; Mohri, M.; Riley, M.; and Rostamizadeh, A.
\newblock 2008.
\newblock Sample selection bias correction theory.
\newblock In {\em Proceedings of the 19th International Conference on
  Algorithmic Learning Theory}, ALT 2008,  38--53.

\bibitem[\protect\citeauthoryear{Davis and Goadrich}{2006}]{Davis2006}
Davis, J., and Goadrich, M.
\newblock 2006.
\newblock The relationship between precision-recall and {ROC} curves.
\newblock In {\em Proceedings of the 23rd International Conference on Machine
  Learning}, ICML 2006,  233--240.

\bibitem[\protect\citeauthoryear{Denis, Gilleron, and
  Letouzey}{2005}]{Denis2005}
Denis, F.; Gilleron, R.; and Letouzey, F.
\newblock 2005.
\newblock Learning from positive and unlabeled examples.
\newblock {\em Theor Comput Sci} 348(16):70--83.

\bibitem[\protect\citeauthoryear{du Plessis and
  Sugiyama}{2014}]{duPlessis2014b}
du~Plessis, M.~C., and Sugiyama, M.
\newblock 2014.
\newblock Class prior estimation from positive and unlabeled data.
\newblock {\em IEICE Trans Inf \& Syst} E97-D(5):1358--1362.

\bibitem[\protect\citeauthoryear{du Plessis, Niu, and
  Sugiyama}{2015}]{du2015class}
du~Plessis, M.~C.; Niu, G.; and Sugiyama, M.
\newblock 2015.
\newblock Class-prior estimation for learning from positive and unlabeled data.
\newblock In {\em Proceedings of the 7th Asian Conference on Machine Learning},
  volume~45 of {\em ACML 2015},  221--236.

\bibitem[\protect\citeauthoryear{Elkan and Noto}{2008}]{Elkan2008}
Elkan, C., and Noto, K.
\newblock 2008.
\newblock Learning classifiers from only positive and unlabeled data.
\newblock In {\em Proceedings of the 14th ACM SIGKDD International Conference
  on Knowledge Discovery and Data Mining}, KDD 2008,  213--220.

\bibitem[\protect\citeauthoryear{Elkan}{2001}]{Elkan2001}
Elkan, C.
\newblock 2001.
\newblock The foundations of cost-sensitive learning.
\newblock In {\em Proceedings of the 17th International Joint Conference on
  Artificial Intelligence}, IJCAI 2001,  973--978.

\bibitem[\protect\citeauthoryear{Fawcett}{2006}]{Fawcett2006}
Fawcett, T.
\newblock 2006.
\newblock An introduction to {ROC} analysis.
\newblock {\em Pattern Recogn Lett} 27:861--874.

\bibitem[\protect\citeauthoryear{Flach and Kull}{2015}]{Flach2015}
Flach, P.~A., and Kull, M.
\newblock 2015.
\newblock Precision-recall-gain curves: {PR} analysis done right.
\newblock In {\em Advances in Neural Information Processing Systems}, NIPS
  2015,  838--846.

\bibitem[\protect\citeauthoryear{Flach}{2003}]{Flach2003}
Flach, P.~A.
\newblock 2003.
\newblock The geometry of {ROC} space: understanding machine learning metrics
  through {ROC} isometrics.
\newblock In {\em Proceedings of the 20th International Conference on Machine
  Learning}, ICML 2003,  194--201.

\bibitem[\protect\citeauthoryear{Hanley and McNeil}{1982}]{Hanley1982}
Hanley, J., and McNeil, B.~J.
\newblock 1982.
\newblock The meaning and use of the area under a receiver operating
  characteristic {(ROC)} curve.
\newblock {\em Radiology} 143(1):29--36.

\bibitem[\protect\citeauthoryear{Hastie, Tibshirani, and
  Friedman}{2001}]{Hastie2001}
Hastie, T.; Tibshirani, R.; and Friedman, J.~H.
\newblock 2001.
\newblock {\em The elements of statistical learning: data mining, inference,
  and prediction}.
\newblock New York, NY: Springer Verlag.

\bibitem[\protect\citeauthoryear{Hern\'{a}ndez-Orallo, Flach, and
  Ferri}{2012}]{Hernandez2012}
Hern\'{a}ndez-Orallo, J.; Flach, P.; and Ferri, C.
\newblock 2012.
\newblock A unified view of performance metrics: translating threshold choice
  into expected classification loss.
\newblock {\em J Mach Learn Res} 13(1):2813--2869.

\bibitem[\protect\citeauthoryear{Iakoucheva \bgroup et al\mbox.\egroup
  }{2004}]{Iakoucheva2004}
Iakoucheva, L.~M.; Radivojac, P.; Brown, C.~J.; O'Connor, T.~R.; Sikes, J.~G.;
  Obradovic, Z.; and Dunker, A.~K.
\newblock 2004.
\newblock The importance of intrinsic disorder for protein phosphorylation.
\newblock {\em Nucleic Acids Res} 32(3):1037--1049.

\bibitem[\protect\citeauthoryear{Jain \bgroup et al\mbox.\egroup
  }{2016}]{Jain2016}
Jain, S.; White, M.; Trosset, M.~W.; and Radivojac, P.
\newblock 2016.
\newblock Nonparametric semi-supervised learning of class proportions.
\newblock {\em arXiv preprint arXiv:1601.01944}.

\bibitem[\protect\citeauthoryear{Jain, White, and
  Radivojac}{2016}]{jain2016estimating}
Jain, S.; White, M.; and Radivojac, P.
\newblock 2016.
\newblock Estimating the class prior and posterior from noisy positives and
  unlabeled data.
\newblock In {\em Advances in Neural Information Processing Systems}, NIPS
  2016,  2685--2693.

\bibitem[\protect\citeauthoryear{Jiang \bgroup et al\mbox.\egroup
  }{2014}]{Jiang2014}
Jiang, Y.; Clark, W.~T.; Friedberg, I.; and Radivojac, P.
\newblock 2014.
\newblock The impact of incomplete knowledge on the evaluation of protein
  function prediction: a structured-output learning perspective.
\newblock {\em Bioinformatics} 30(17):i609--i616.

\bibitem[\protect\citeauthoryear{Lichman}{2013}]{Lichman2013}
Lichman, M.
\newblock 2013.
\newblock {UCI Machine Learning Repository}.

\bibitem[\protect\citeauthoryear{Menon \bgroup et al\mbox.\egroup
  }{2015}]{Menon2015}
Menon, A.~K.; van Rooyen, B.; Ong, C.~S.; and Williamson, R.~C.
\newblock 2015.
\newblock Learning from corrupted binary labels via class-probability
  estimation.
\newblock In {\em Proceedings of the 32nd International Conference on Machine
  Learning}, ICML 2015,  125--134.

\bibitem[\protect\citeauthoryear{Peng \bgroup et al\mbox.\egroup
  }{2003}]{Peng2003}
Peng, K.; Vucetic, S.; Han, B.; Xie, H.; and Obradovic, Z.
\newblock 2003.
\newblock Exploiting unlabeled data for improving accuracy of predictive data
  mining.
\newblock In {\em Proceedings of the 3rd IEEE International Conference on Data
  Mining}, ICDM 2003,  267--274.

\bibitem[\protect\citeauthoryear{Ramaswamy, Scott, and
  Tewari}{2016}]{ramaswamy2016mixture}
Ramaswamy, H.~G.; Scott, C.; and Tewari, A.
\newblock 2016.
\newblock Mixture proportion estimation via kernel embedding of distributions.
\newblock {\em arXiv preprint arXiv:1603.02501}.

\bibitem[\protect\citeauthoryear{Riedmiller and Braun}{1993}]{Riedmiller1993}
Riedmiller, M., and Braun, H.
\newblock 1993.
\newblock A direct adaptive method for faster backpropagation learning: the
  {RPROP} algorithm.
\newblock In {\em Proceedings of the IEEE International Conference on Neural
  Networks}, ICNN 1993,  586--591.

\bibitem[\protect\citeauthoryear{Saerens, Latinne, and
  Decaestecker}{2002}]{Saerens2002}
Saerens, M.; Latinne, P.; and Decaestecker, C.
\newblock 2002.
\newblock Adjusting the outputs of a classifier to new a priori probabilities:
  a simple procedure.
\newblock {\em Neural Comput} 14:21--41.

\bibitem[\protect\citeauthoryear{Sanderson and
  Scott}{2014}]{sanderson2014class}
Sanderson, T., and Scott, C.
\newblock 2014.
\newblock Class proportion estimation with application to multiclass anomaly
  rejection.
\newblock In {\em Proceedings of the 17th International Conference on
  Artificial Intelligence and Statistics}, AISTATS 2014,  850--858.

\bibitem[\protect\citeauthoryear{Scott, Blanchard, and Handy}{2013}]{Scott2013}
Scott, C.; Blanchard, G.; and Handy, G.
\newblock 2013.
\newblock Classification with asymmetric label noise: consistency and maximal
  denoising.
\newblock {\em J Mach Learn Res W\&CP} 30:489--511.

\bibitem[\protect\citeauthoryear{Ward \bgroup et al\mbox.\egroup
  }{2009}]{Ward2009}
Ward, G.; Hastie, T.; Barry, S.; Elith, J.; and Leathwick, J.
\newblock 2009.
\newblock Presence-only data and the {EM} algorithm.
\newblock {\em Biometrics} 65(2):554--563.

\bibitem[\protect\citeauthoryear{Whalen}{1971}]{Whalen1971}
Whalen, A.~D.
\newblock 1971.
\newblock {\em Detection of signals in noise}.
\newblock New York, NY: Academic Press.

\end{thebibliography}
}
\bibliographystyle{aaai}

\newpage
\appendix
\section{Appendix}
This appendix describes the indirect method for recovering the area under the ROC curve (AUC) and the area under the precision-recall curve (AUC-PR). Additional characterization of the quality of recovered AUCs is then provided over the entire range of estimated $\alpha$ and $\beta$. Finally, full dataset-specific results are summarized, including statistical tests for comparing direct and indirect recovery methods.

\subsection{Indirect estimators}

As mentioned before, we estimate $\tprLU$ and $\fprLU$ as empirical means of $\cls(x)$ over $\Xl$ and $\Xu$, respectively; i.e.,
\begin{align*}
\tprLUhat&=\frac{1}{|\Xl|}\sum_{x \in \Xl}\cls(\xl),\\    
\fprLUhat&=\frac{1}{|\Xu|}\sum_{x \in \Xu}\cls(\xu).
\end{align*}
We then estimate (recover) $\fpr$ and $\tpr$ by replacing $\mpu$, $\mpl$, $\fprLU$ and $\tprLU$ with their estimates in Equations \ref{eq:tprLU} and \ref{eq:fprLU} as
\begin{equation}
\begin{aligned}\label{eq:fprtprhat}
\hat{\tpr}&= \frac{(1-\hat{\mpu})\tprLUhat-(1-\hat{\mpl})\fprLUhat}{\hat{\mpl}-\hat{\mpu}}\\
\hat{\fpr}&= \frac{\hat{\mpl}\fprLUhat - \hat{\mpu}\tprLUhat}{\hat{\mpl}-\hat{\mpu}}
.
\end{aligned}
\end{equation}
To recover the ROC curve we estimate $\fpr$ and $\tpr$ as in Equation \ref{eq:fprtprhat} across different classifiers. However, one needs to be careful because $\hat{\fpr}$ and $\hat{\tpr}$ can take values inconsistent with theory; i.e., outside of the $[0,1]$ range. For example $\hat{\fpr}$ and $\hat{\tpr}$ can be negative; moreover, there is no guarantee on the monotonicity between $\hat{\fpr}$ and $\hat{\tpr}$. We provide an algorithm to correct $\hat{\tpr}$ versus $\hat{\fpr}$ curve in Algorithm \ref{alg:correction}.

\begin{algorithm}[H]
\caption{Algorithm for recovering the ROC curve} \label{alg:correction}
\begin{algorithmic}[] 
\REQUIRE $\hat{\alpha}$ and $\hat{\beta}$ and vectors $\vec{\fpr}^{pu}, \vec{\tpr}^{pu}$, where $i$\textsuperscript{th} entry contains the estimate of $\fprLU$, $\tprLU$ pair coming from the same classifier $h_i$.
\ENSURE vectors  $\vec{\fpr}, \vec{\tpr}$, where $i$\textsuperscript{th} entry contains the $\fpr$, $\tpr$ estimates for classifier $h_i$. The curve $\vec{\fpr}$ versus $\vec{\tpr}$ satisfies the properties of an ROC curve; i.e.,  monotonicity and restriction to the region between (0,0) and (1,1) 
\STATE // Apply Equation \ref{eq:fprtprhat} to get initial $\fpr$, $\tpr$ estimates.
\begin{equation*}
\vec{\fpr} \leftarrow \frac{\hat{\mpl}\vec{\fpr}^{pu} - \hat{\mpu}\vec{\tpr}^{pu}}{\hat{\mpl}-\hat{\mpu}},
\vec{\tpr}\leftarrow \frac{(1-\hat{\mpu})\vec{\tpr}^{pu}-(1-\hat{\mpl})\vec{\fpr}^{pu}}{\hat{\mpl}-\hat{\mpu}}.
\end{equation*}

\STATE // Remove indices for which $\hat{\fpr}$ or $\hat{\tpr}$ are outside $[0,1]$\\

\STATE // Sort $\vec{\fpr}$ in ascending order and reorder the entries in $\vec{\tpr}$ accordingly.

\STATE // Make the curve non-decreasing by replacing the non-increasing values of $\tprhat$ by the largest value to its left. 
%
\end{algorithmic}
\end{algorithm}

%
%
%

Finally, we can consider two approaches to estimating (recovering) precision. Equation \ref{eq:pr1} suggests estimating the precision as
\begin{equation*}
\prhat= \frac{\hat{\mpu}\tprhat}{\fprLUhat}.
\end{equation*}
Alternatively, the equation can be expressed in terms of $\prLU$ as follows
$$\prhat= \frac{\hat{\mpu}(1-\hat{\mpu})}{\hat{\mpl}-\hat{\mpu}}\paren*{\frac{1-c}{c}\paren*{\frac{\prLUhat}{1-\prLUhat}}-\frac{1-\hat{\mpl}}{1-\hat{\mpu}}},$$
where 
$$\prLUhat=\frac{|\Xone|\tprhat^{pu}}{|\Xone|\tprhat^{pu}+|\Xu|\fprLUhat}.$$

\noindent These estimates are equivalent. The estimate in terms of $\prLUhat$ can be useful if the positive-unlabeled precision is already computed. In general, however, in the absence of this estimate, 
the more direct computation in Equation \ref{eq:pr1} is more desirable. Computing precision directly leads to an equivalent indirect algorithm for recovering the area under the pr-rc curve (AUC-PR).

\subsection{Visualizing errors of AUC estimates}
Figure \ref{fig:heatmaps} shows the absolute difference between the true and recovered areas under the ROC curve as a function of $\hat{\alpha}$ and $\hat{\beta}$ for several combinations of $(\alpha,\beta,\textrm{AUC})$. Specifically, we first selected the true values $(\alpha,\beta,\textrm{AUC})$, from which we calculated $\aucLU$ using Equation 4. For this value of $\aucLU$, we then varied the estimates $\hat{\alpha}$ and $\hat{\beta}$ in $[0,1]$ and used Equation 4 again to compute the recovered area under the ROC curve, $\aucREC$.

The plots suggest that the feasible region for  $(\hat{\alpha},\hat{\beta})$ pairs contains the upper left-hand triangle and lower right-hand triangle. However, since the values in the lower right region lead to the $\aucREC\leq0.5$, this part of the $(\hat{\alpha},\hat{\beta})$ space is not of interest. The middle region corresponds to the estimated AUC values above 1 or below 0 and, therefore, is referred to as infeasible region (Figure \ref{fig:heatmapsDiag}). In our experiments, the estimated AUCs in this region are simply set to 1 or 0, but also suggest problems in the analysis; e.g., that the assumptions may not hold. The size of the infeasible region varies with the true values of $(\alpha,\beta,\textrm{AUC})$, with generally larger values of AUC leading to larger infeasible regions.

Figure \ref{fig:heatmapsDiag} summarizes all panels from Figure \ref{fig:heatmaps}, where each line of interest is characterized as a function of true values of $\alpha$ and $\beta$. When $\hat{\alpha}=0$ and $\hat{\beta}=1$ (upper left-hand corner), the estimated value of AUC equals $\aucLU$. The AUC estimate is correct whenever $\beta-\alpha$ is accurately estimated; i.e., anywhere on the 45$\textdegree$ line $$\hat{\beta}=\hat{\alpha}+\beta-\alpha.$$ 
The remaining regions of interest are shown in Figure \ref{fig:heatmapsDiag}.

\subsection{Dataset-specific results}

The full results on individual UCI data sets are provided in Table \ref{table_auc} and Table \ref{table_pr}. 

\renewcommand{\multirowsetup}{\centering}
\newlength{\LL} \settowidth{\LL}{Data}
\setlength{\tabcolsep}{3pt}
\begin{table*}[t]
\centering
\caption{\small Mean absolute difference between estimate of area under the ROC curve obtained in a traditional setting, AUC, (from data set with labeled positives and negatives) and the uncorrected (\aucPU) and corrected (IR, DR, IE, DE) estimates from dataset containing only noisy positives and unlabeled examples. The mean absolute differences are reported under PU, IR, DR, IE and DE. Twelve data sets from the UCI Machine Learning Repository are used to construct the positive and unlabeled data sets by sampling. IR, DR, IE, DE either use the {\bf R}eal value of $(\mpu,\mpl)$ or the {\bf E}stimated value for correction. \textbf{D} indicates that the AUC from the PU setting was {\bf D}irectly corrected using Equation \ref{eq_auc} and \textbf{I} indicates {\textbf I}ndirect correction by first correcting for the ROC curves. $e$ is the mean absolute error in $\beta -\alpha$ estimates; $d$ gives the dimensions of the data; $n_1$ and $n$ give the number of positives and the total number of points in the original data set, respectively.}\label{table_auc}
\footnotesize
\tracingtabularx
\begin{tabularx}{1.01\linewidth} 
{|>{\setlength{\hsize}{0.100000\hsize}}X| 
>{\setlength{\hsize}{0.050000\hsize}}X|
>{\setlength{\hsize}{0.050000\hsize}}X|
>{\setlength{\hsize}{0.050000\hsize}}X|
>{\setlength{\hsize}{0.080000\hsize}}X|
>{\setlength{\hsize}{0.080000\hsize}}X|
>{\setlength{\hsize}{0.050000\hsize}}X|
>{\setlength{\hsize}{0.050000\hsize}}X|
>{\setlength{\hsize}{0.070000\hsize}}X|
>{\setlength{\hsize}{0.114000\hsize}}X|
>{\setlength{\hsize}{0.114000\hsize}}X|
>{\setlength{\hsize}{0.114000\hsize}}X|
>{\setlength{\hsize}{0.114000\hsize}}X|
>{\setlength{\hsize}{0.114000\hsize}}X|
}\hline
\multirow{1}{1\LL}{Data} & $\mpu$ & $\mpl$ & $d$ & $n_1$ & $n$ & $e$ & AUC & \aucLU 
& \multicolumn{1}{c|}{PU }
& \multicolumn{1}{c|}{IR }
& \multicolumn{1}{c|}{DR }
& \multicolumn{1}{c|}{IE }
& \multicolumn{1}{c|}{DE }
\\ \hline
 \hline\multirow{3}{1\LL}{Bank}  & 0.095 & 1.000 & 13 & 5188 & 45000 & 0.238 & 0.884& 0.842& 0.042 & 0.007 & 0.007 & 0.088 & 0.115 
 \\ 
 & 0.096 & 0.950 & 13 & 5188 & 45000 & 0.230 & 0.884& 0.819& 0.065 & 0.011 & 0.011 & 0.086 & 0.113 
 \\ 
 & 0.101 & 0.750 & 13 & 5188 & 45000 & 0.167 & 0.884& 0.744& 0.140 & 0.010 & 0.011 & 0.085 & 0.111 
 \\ 
\hline 
\multirow{3}{1\LL}{Concrete}  & 0.419 & 1.000 & 8 & 490 & 1030 & 0.143 & 0.940& 0.685& 0.255 & 0.117 & 0.122 & 0.068 & 0.060 
 \\ 
 & 0.425 & 0.950 & 8 & 490 & 1030 & 0.130 & 0.940& 0.661& 0.278 & 0.129 & 0.132 & 0.078 & 0.065 
 \\ 
 & 0.446 & 0.750 & 8 & 490 & 1030 & 0.145 & 0.938& 0.567& 0.371 & 0.201 & 0.216 & 0.196 & 0.206 
 \\ 
\hline 
\multirow{3}{1\LL}{Gas}  & 0.342 & 1.000 & 127 & 2565 & 5574 & 0.006 & 1.000& 0.824& 0.175 & 0.018 & 0.007 & 0.010 & 0.007 
 \\ 
 & 0.353 & 0.950 & 127 & 2565 & 5574 & 0.013 & 1.000& 0.795& 0.205 & 0.003 & 0.007 & 0.012 & 0.016 
 \\ 
 & 0.397 & 0.750 & 127 & 2565 & 5574 & 0.010 & 1.000& 0.672& 0.328 & 0.007 & 0.015 & 0.011 & 0.007 
 \\ 
\hline 
\multirow{3}{1\LL}{Housing}  & 0.268 & 1.000 & 13 & 209 & 506 & 0.063 & 0.950& 0.809& 0.142 & 0.028 & 0.029 & 0.038 & 0.038 
 \\ 
 & 0.281 & 0.950 & 13 & 209 & 506 & 0.055 & 0.951& 0.776& 0.175 & 0.037 & 0.041 & 0.043 & 0.042 
 \\ 
 & 0.330 & 0.750 & 13 & 209 & 506 & 0.079 & 0.951& 0.651& 0.301 & 0.083 & 0.094 & 0.094 & 0.101 
 \\ 
\hline 
\multirow{3}{1\LL}{Landsat}  & 0.093 & 1.000 & 36 & 1508 & 6435 & 0.035 & 0.981& 0.933& 0.048 & 0.004 & 0.004 & 0.005 & 0.015 
 \\ 
 & 0.103 & 0.950 & 36 & 1508 & 6435 & 0.022 & 0.981& 0.904& 0.077 & 0.005 & 0.005 & 0.004 & 0.009 
 \\ 
 & 0.139 & 0.750 & 36 & 1508 & 6435 & 0.020 & 0.981& 0.788& 0.192 & 0.008 & 0.009 & 0.004 & 0.008 
 \\ 
\hline 
\multirow{3}{1\LL}{Mushroom}  & 0.409 & 1.000 & 126 & 3916 & 8124 & 0.006 & 1.000& 0.792& 0.208 & 0.006 & 0.007 & 0.013 & 0.011 
 \\ 
 & 0.416 & 0.950 & 126 & 3916 & 8124 & 0.010 & 1.000& 0.766& 0.234 & 0.003 & 0.005 & 0.009 & 0.012 
 \\ 
 & 0.444 & 0.750 & 126 & 3916 & 8124 & 0.010 & 1.000& 0.648& 0.352 & 0.011 & 0.018 & 0.016 & 0.020 
 \\ 
\hline 
\multirow{3}{1\LL}{Pageblock}  & 0.086 & 1.000 & 10 & 560 & 5473 & 0.413 & 0.970& 0.884& 0.086 & 0.049 & 0.050 & 0.020 & 0.030 
 \\ 
 & 0.087 & 0.950 & 10 & 560 & 5473 & 0.408 & 0.970& 0.858& 0.112 & 0.055 & 0.056 & 0.019 & 0.030 
 \\ 
 & 0.090 & 0.750 & 10 & 560 & 5473 & 0.361 & 0.969& 0.767& 0.202 & 0.059 & 0.064 & 0.018 & 0.031 
 \\ 
\hline 
\multirow{3}{1\LL}{Pendigit}  & 0.243 & 1.000 & 16 & 3430 & 10992 & 0.009 & 0.999& 0.875& 0.124 & 0.004 & 0.004 & 0.004 & 0.002 
 \\ 
 & 0.248 & 0.950 & 16 & 3430 & 10992 & 0.007 & 0.999& 0.847& 0.152 & 0.005 & 0.005 & 0.007 & 0.008 
 \\ 
 & 0.268 & 0.750 & 16 & 3430 & 10992 & 0.010 & 0.999& 0.738& 0.262 & 0.006 & 0.008 & 0.002 & 0.002 
 \\ 
\hline 
\multirow{3}{1\LL}{Pima}  & 0.251 & 1.000 & 8 & 268 & 768 & 0.191 & 0.835& 0.734& 0.101 & 0.026 & 0.028 & 0.070 & 0.090 
 \\ 
 & 0.259 & 0.950 & 8 & 268 & 768 & 0.155 & 0.838& 0.710& 0.128 & 0.038 & 0.040 & 0.060 & 0.069 
 \\ 
 & 0.289 & 0.750 & 8 & 268 & 768 & 0.149 & 0.836& 0.623& 0.213 & 0.070 & 0.075 & 0.064 & 0.073 
 \\ 
\hline 
\multirow{3}{1\LL}{Shuttle}  & 0.139 & 1.000 & 9 & 8903 & 58000 & 0.007 & 1.000& 0.929& 0.071 & 0.001 & 0.002 & 0.015 & 0.005 
 \\ 
 & 0.140 & 0.950 & 9 & 8903 & 58000 & 0.026 & 0.999& 0.903& 0.096 & 0.001 & 0.002 & 0.016 & 0.017 
 \\ 
 & 0.143 & 0.750 & 9 & 8903 & 58000 & 0.004 & 0.999& 0.802& 0.198 & 0.001 & 0.004 & 0.002 & 0.004 
 \\ 
\hline 
\multirow{3}{1\LL}{Spambase}  & 0.226 & 1.000 & 57 & 1813 & 4601 & 0.061 & 0.961& 0.842& 0.118 & 0.018 & 0.018 & 0.013 & 0.020 
 \\ 
 & 0.240 & 0.950 & 57 & 1813 & 4601 & 0.050 & 0.959& 0.812& 0.147 & 0.019 & 0.020 & 0.010 & 0.015 
 \\ 
 & 0.295 & 0.750 & 57 & 1813 & 4601 & 0.057 & 0.961& 0.695& 0.265 & 0.031 & 0.032 & 0.021 & 0.028 
 \\ 
\hline 
\multirow{3}{1\LL}{Wine}  & 0.566 & 1.000 & 11 & 4113 & 6497 & 0.133 & 0.815& 0.626& 0.188 & 0.027 & 0.028 & 0.099 & 0.109 
 \\ 
 & 0.575 & 0.950 & 11 & 4113 & 6497 & 0.121 & 0.816& 0.610& 0.207 & 0.024 & 0.026 & 0.104 & 0.117 
 \\ 
 & 0.612 & 0.750 & 11 & 4113 & 6497 & 0.186 & 0.816& 0.531& 0.285 & 0.095 & 0.104 & 0.158 & 0.158 
 \\ 
\hline 
\end{tabularx}
\normalsize
\end{table*}

\begin{table*}[t]
\centering
\caption{\small Mean absolute difference between estimate of area under the pr-rc curve obtained in a traditional setting, AUC-PR, (from dataset with labeled positives and negatives) and the uncorrected (AUC-PR$^{pu}$) and corrected (IR, IE) estimates from dataset containing only noisy positives and unlabeled examples. The mean absolute differences are reported under PU, IR and IE. Twelve data sets from the UCI Machine Learning Repository are used to construct the positive and unlabeled datasets by sampling. IR, IE either use the {\bf R}eal value of $(\mpu,\mpl)$ or the {\bf E}stimated value for correction. \textbf{I} indicates {\textbf I}ndirect correction by first correcting for the pr-rc curves. $e$ is the mean absolute error in $\beta -\alpha$ estimates; $d$ gives the dimensions of the data $n_1$ and $n$ give the number of positives and the total number of points in the original data set, respectively.}\label{table_pr}
\footnotesize
\tracingtabularx
\begin{tabularx}{1.01\linewidth} 
{|>{\setlength{\hsize}{0.100000\hsize}}X| 
>{\setlength{\hsize}{0.050000\hsize}}X|
>{\setlength{\hsize}{0.050000\hsize}}X|
>{\setlength{\hsize}{0.050000\hsize}}X|
>{\setlength{\hsize}{0.080000\hsize}}X|
>{\setlength{\hsize}{0.080000\hsize}}X|
>{\setlength{\hsize}{0.050000\hsize}}X|
>{\setlength{\hsize}{0.12000\hsize}}X|
>{\setlength{\hsize}{0.12000\hsize}}X|
>{\setlength{\hsize}{0.116667\hsize}}X|
>{\setlength{\hsize}{0.116667\hsize}}X|
>{\setlength{\hsize}{0.116667\hsize}}X|
}\hline
\multirow{1}{1\LL}{Data} & $\mpu$ & $\mpl$ & $d$ & $n_1$ & $n$ & $e$ & AUC-PR & AUC-PR$^{pu}$ 
& \multicolumn{1}{c|}{ PU }
& \multicolumn{1}{c|}{ IR }
& \multicolumn{1}{c|}{ IE }
\\ \hline
 \hline\multirow{3}{1\LL}{Bank}  & 0.095 & 1.000 & 13 & 5188 & 45000 & 0.238 & 0.478& 0.319& 0.158 & 0.029 & 0.382 
 \\ 
 & 0.096 & 0.950 & 13 & 5188 & 45000 & 0.230 & 0.482& 0.299& 0.184 & 0.027 & 0.368 
 \\ 
 & 0.101 & 0.750 & 13 & 5188 & 45000 & 0.167 & 0.491& 0.236& 0.255 & 0.030 & 0.358 
 \\ 
\hline 
\multirow{3}{1\LL}{Concrete}  & 0.419 & 1.000 & 8 & 490 & 1030 & 0.143 & 0.914& 0.162& 0.752 & 0.163 & 0.321 
 \\ 
 & 0.425 & 0.950 & 8 & 490 & 1030 & 0.130 & 0.919& 0.158& 0.761 & 0.164 & 0.404 
 \\ 
 & 0.446 & 0.750 & 8 & 490 & 1030 & 0.145 & 0.921& 0.124& 0.796 & 0.214 & 0.552 
 \\ 
\hline 
\multirow{3}{1\LL}{Gas}  & 0.342 & 1.000 & 127 & 2565 & 5574 & 0.006 & 1.000& 0.381& 0.619 & 0.014 & 0.046 
 \\ 
 & 0.353 & 0.950 & 127 & 2565 & 5574 & 0.013 & 1.000& 0.358& 0.642 & 0.014 & 0.026 
 \\ 
 & 0.397 & 0.750 & 127 & 2565 & 5574 & 0.010 & 1.000& 0.270& 0.730 & 0.014 & 0.003 
 \\ 
\hline 
\multirow{3}{1\LL}{Housing}  & 0.268 & 1.000 & 13 & 209 & 506 & 0.063 & 0.905& 0.430& 0.475 & 0.067 & 0.270 
 \\ 
 & 0.281 & 0.950 & 13 & 209 & 506 & 0.055 & 0.909& 0.396& 0.514 & 0.091 & 0.306 
 \\ 
 & 0.330 & 0.750 & 13 & 209 & 506 & 0.079 & 0.924& 0.293& 0.631 & 0.152 & 0.368 
 \\ 
\hline 
\multirow{3}{1\LL}{Landsat}  & 0.093 & 1.000 & 36 & 1508 & 6435 & 0.035 & 0.882& 0.618& 0.265 & 0.041 & 0.033 
 \\ 
 & 0.103 & 0.950 & 36 & 1508 & 6435 & 0.022 & 0.887& 0.569& 0.318 & 0.039 & 0.029 
 \\ 
 & 0.139 & 0.750 & 36 & 1508 & 6435 & 0.020 & 0.911& 0.407& 0.504 & 0.049 & 0.023 
 \\ 
\hline 
\multirow{3}{1\LL}{Mushroom}  & 0.409 & 1.000 & 126 & 3916 & 8124 & 0.006 & 1.000& 0.252& 0.748 & 0.004 & 0.056 
 \\ 
 & 0.416 & 0.950 & 126 & 3916 & 8124 & 0.010 & 1.000& 0.238& 0.762 & 0.003 & 0.019 
 \\ 
 & 0.444 & 0.750 & 126 & 3916 & 8124 & 0.010 & 1.000& 0.179& 0.821 & 0.025 & 0.049 
 \\ 
\hline 
\multirow{3}{1\LL}{Pageblock}  & 0.086 & 1.000 & 10 & 560 & 5473 & 0.413 & 0.840& 0.130& 0.710 & 0.161 & 0.198 
 \\ 
 & 0.087 & 0.950 & 10 & 560 & 5473 & 0.408 & 0.839& 0.119& 0.721 & 0.169 & 0.281 
 \\ 
 & 0.090 & 0.750 & 10 & 560 & 5473 & 0.361 & 0.843& 0.084& 0.759 & 0.178 & 0.340 
 \\ 
\hline 
\multirow{3}{1\LL}{Pendigit}  & 0.243 & 1.000 & 16 & 3430 & 10992 & 0.009 & 0.998& 0.288& 0.710 & 0.010 & 0.025 
 \\ 
 & 0.248 & 0.950 & 16 & 3430 & 10992 & 0.007 & 0.998& 0.265& 0.733 & 0.017 & 0.028 
 \\ 
 & 0.268 & 0.750 & 16 & 3430 & 10992 & 0.010 & 0.998& 0.194& 0.804 & 0.015 & 0.013 
 \\ 
\hline 
\multirow{3}{1\LL}{Pima}  & 0.251 & 1.000 & 8 & 268 & 768 & 0.191 & 0.612& 0.256& 0.356 & 0.070 & 0.224 
 \\ 
 & 0.259 & 0.950 & 8 & 268 & 768 & 0.155 & 0.621& 0.237& 0.383 & 0.085 & 0.228 
 \\ 
 & 0.289 & 0.750 & 8 & 268 & 768 & 0.149 & 0.653& 0.191& 0.462 & 0.106 & 0.254 
 \\ 
\hline 
\multirow{3}{1\LL}{Shuttle}  & 0.139 & 1.000 & 9 & 8903 & 58000 & 0.007 & 0.992& 0.414& 0.578 & 0.009 & 0.192 
 \\ 
 & 0.140 & 0.950 & 9 & 8903 & 58000 & 0.026 & 0.994& 0.386& 0.608 & 0.013 & 0.085 
 \\ 
 & 0.143 & 0.750 & 9 & 8903 & 58000 & 0.004 & 0.994& 0.293& 0.700 & 0.008 & 0.014 
 \\ 
\hline 
\multirow{3}{1\LL}{Spambase}  & 0.226 & 1.000 & 57 & 1813 & 4601 & 0.061 & 0.892& 0.502& 0.390 & 0.054 & 0.060 
 \\ 
 & 0.240 & 0.950 & 57 & 1813 & 4601 & 0.050 & 0.894& 0.468& 0.425 & 0.054 & 0.054 
 \\ 
 & 0.295 & 0.750 & 57 & 1813 & 4601 & 0.057 & 0.917& 0.353& 0.564 & 0.072 & 0.048 
 \\ 
\hline 
\multirow{3}{1\LL}{Wine}  & 0.566 & 1.000 & 11 & 4113 & 6497 & 0.133 & 0.849& 0.209& 0.641 & 0.033 & 0.085 
 \\ 
 & 0.575 & 0.950 & 11 & 4113 & 6497 & 0.121 & 0.854& 0.200& 0.654 & 0.031 & 0.090 
 \\ 
 & 0.612 & 0.750 & 11 & 4113 & 6497 & 0.186 & 0.870& 0.166& 0.703 & 0.086 & 0.441 
 \\ 
\hline 
\end{tabularx}
\normalsize
\end{table*}

The AUC comparisons between the direct and indirect method was conducted using the counting tests. Each combination (data set, $\alpha$, $\beta$) was considered to be an independent experiment and the number of wins vs. losses were counted for each algorithm; in case of ties, the wins were distributed in an alternating manner, starting with the direct method, then indirect, and so on. Finally, statistical significance was tested by using a one-sided binomial test where the null hypothesis ($H_{0}$) was that the two algorithms have equal performance and the alternative hypothesis ($H_{1}$) was that the indirect method is more accurate than the direct method. The P-value was calculated as

\[
P=\sum_{i=k}^{n}\binom{n}{i}p^{i}(1-p)^{n-i}
\]

\noindent where $n=36$ is the total number of experiments, $k$ is the number of times the indirect method outperformed the direct method, and $p=\nicefrac{1}{2}$ is the probability of a win for either method under $H_{0}$.

In the case of real values of $\alpha$ and $\beta$ (Table 1, columns DR vs. IR), we observed 1 win for the direct method, 28 wins for the indirect method and 7 ties ($k=31$). This resulted in $P=6.5\cdot10^{-6}$. On the other hand, in the case of the estimated values of $\alpha$ and $\beta$ (Table 1, columns DE vs. IE), we observed 9 wins for the direct method, 25 wins for the indirect method and 2 ties ($k=26$). This resulted in $P=5.7\cdot10^{-3}$.

A possible reason for this outcome may be the sensitivity of the one-step direct conversion from Equation 4 to errors in estimating $\aucLU$ and $\beta-\alpha$, which can frequently land $\aucREC$ in the infeasible region. The indirect method, on the other hand, re-estimates the true positive and false positive rates for each decision threshold to first recover the ROC curve. Although this method seems more sensitive to the errors in estimating $\beta-\alpha$, it allows for removal of problematic points from the ROC curve and, thus, leads to an increased accuracy of estimation. Additional experiments are necessary to further characterize both direct and indirect methods; e.g., the sensitivity of the indirect method to the number of $(\eta, \gamma)$ points used to construct an ROC curve.

\begin{figure*}
\includegraphics{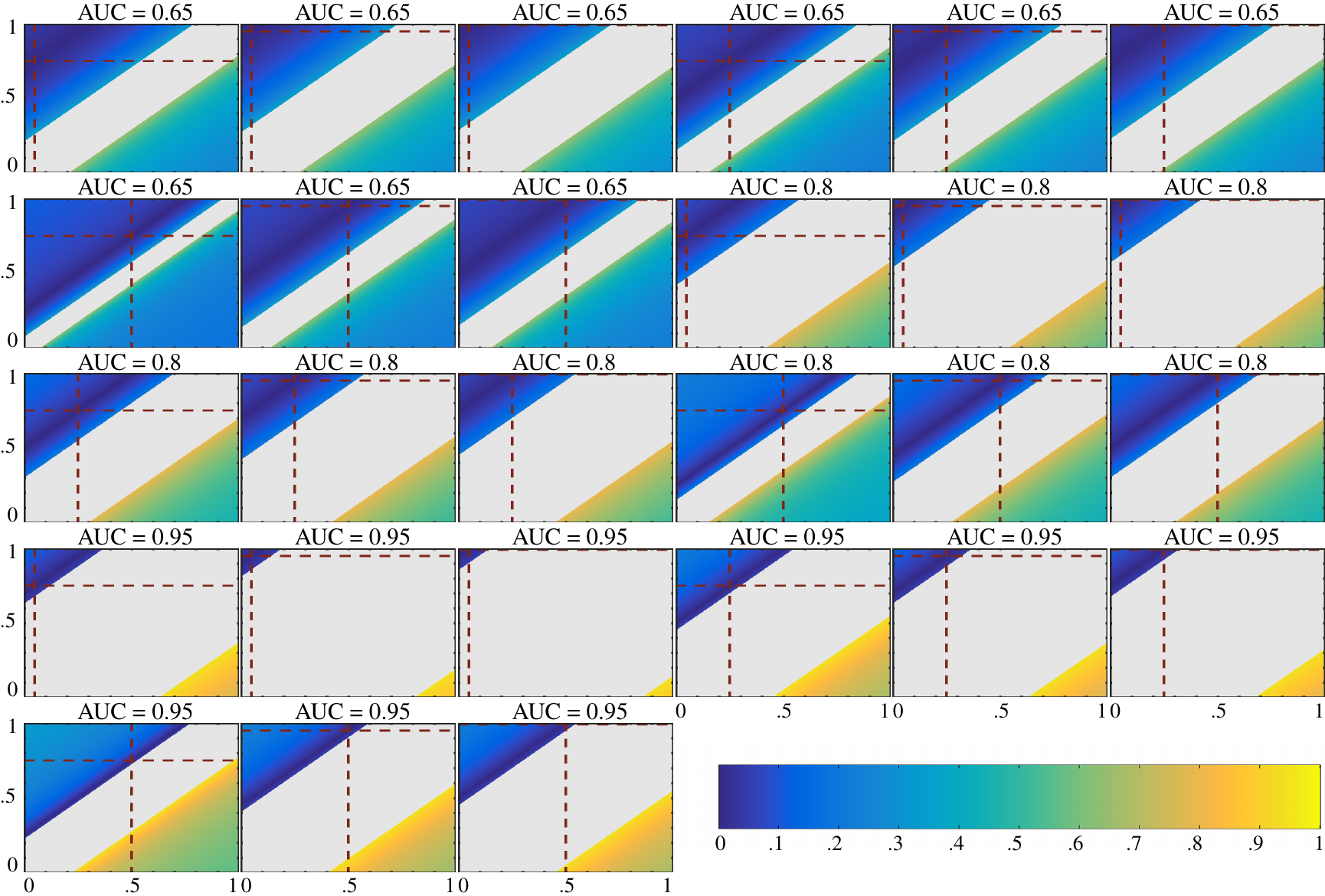}

\vspace{0.25cm}
 \caption{Characterization of the AUC recovery. Each heatmap shows the absolute error between true and recovered AUC values for a different combination of $(\alpha,\beta,\textrm{AUC})$. First, the true $(\alpha,\beta,\textrm{AUC})$ were selected by picking $\alpha\in\left\{ 0.05,0.25,0.50\right\}$, $\beta\in\left\{ 0.75,0.95,1.00\right\}$ and $\textrm{AUC}\in\left\{ 0.65,0.80,0.95\right\}$, from which $\aucLU$ was calculated using Equation 4. Finally, Equation 4 was again used to calculate the recovered AUC, $\aucREC$, for all combinations of estimated $\alpha$ and $\beta$ in the unit square. The colors in the heatmap reflect absolute errors between $\textrm{AUC}$ and $\aucREC$, whereas the gray color around the diagonal indicates the region in which $\aucREC$ is outside of the $[0, 1]$ interval. The x-axis represents the estimated $\alpha$, the y-axis represents the estimated $\beta$, while the true $\alpha$ and $\beta$ are shown by dotted lines. The true AUC is shown on top of each heatmap.}
 \label{fig:heatmaps}
 \end{figure*}

\begin{figure*} 
\centering
 \includegraphics{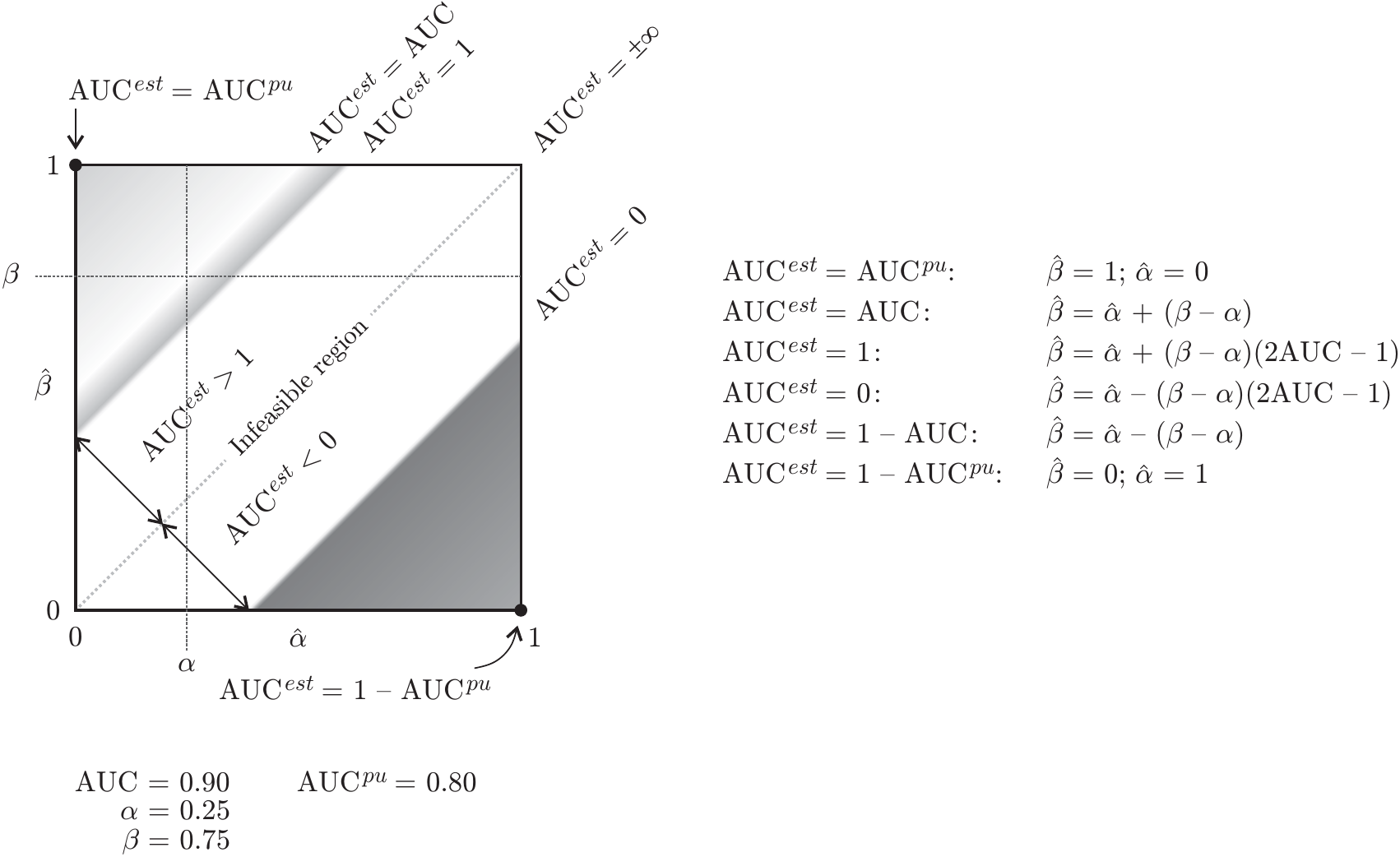}
 \vspace{0.25cm}
 \caption{Characterization of the AUC recovery. The left panel illustrates the absolute error between true and recovered AUC values as a function of estimated $\alpha$ and $\beta$. First, the true $(\alpha,\beta,\textrm{AUC})$ were set to $(0.25, 0.75, 0.90)$, from which the $\aucLU$ was calculated to be $0.80$ from Equation 4. Finally, Equation 4 was again used to calculate the recovered AUC, $\aucREC$, for all estimated $\alpha$ and $\beta$ combinations in the unit square. The lighter shades indicate smaller absolute errors between $\textrm{AUC}$ and $\aucREC$, while the darker shades indicate larger absolute errors, except for the infeasible region that is shown in white. The right panel summarizes notable regions (lines) where the recovered AUC, $\aucREC$, corresponds to particular important values.}
 \label{fig:heatmapsDiag}
\end{figure*} 

\end{document}